\documentclass[letterpaper]{article} 
\usepackage[preprint]{aaai2027}  
\usepackage[hyphens]{url}  
\usepackage{graphicx} 
\usepackage{natbib}  
\usepackage{caption} 
\usepackage{algorithm}
\usepackage{algorithmic}
\usepackage{amsmath,amsfonts,amssymb,amsthm}
\usepackage{multirow,xcolor}
\usepackage[table]{xcolor}
\newtheorem{proposition}{Proposition}

\usepackage{newfloat}
\usepackage{listings}
\DeclareCaptionStyle{ruled}{labelfont=normalfont,labelsep=colon,strut=off} 
\floatstyle{ruled}
\newfloat{listing}{tb}{lst}{}
\floatname{listing}{Listing}

\usepackage{booktabs}
\usepackage{algorithm}
\usepackage{algorithmic}
\usepackage{amsmath,amsfonts,amssymb,amsthm}
\usepackage[export]{adjustbox}
\usepackage{xcolor}
\title{Class Geometry as Supervision for Sample-Efficient Open-World Detection}
\author{
    Akash Rao\textsuperscript{\rm 1}\corresponding, Zhou Chen\textsuperscript{\rm 1}\equalcontrib, Revanth Reddy Palem\textsuperscript{\rm 1}\equalcontrib, Udhav Ramachandran\textsuperscript{\rm 2}, Ruth Scimeca\textsuperscript{\rm 2}, Sathyanarayanan N. Aakur\textsuperscript{\rm 1}\corresponding
}
\affiliations{
    \textsuperscript{\rm 1}CSSE Department, Auburn University, Auburn, AL, 36849\\

    \textsuperscript{\rm 2} Department of Veterinary Pathobiology, Oklahoma State University, Stillwater, OK, 74078\\
    \{azr0187,san0028\}@auburn.edu
}

\begin{document}

\maketitle

\begin{abstract}
Open-world object detection requires models to recognize known categories, reject unfamiliar objects, and incorporate new classes over time. This is especially challenging in scarce-data settings such as biomedical and scientific imaging, where rare categories may have only a few annotated examples and fine-grained classes differ by subtle morphology. Prototype-based detectors are natural for this regime, but they typically learn class prototypes as independent anchors, ignoring relational structure among classes. We propose class-geometry supervision (CGS), a general framework that constrains learned prototype or class-representation spaces to preserve visual or semantic class dissimilarities estimated from training data. CGS introduces a dissimilarity-preserving objective that aligns pairwise distances among learned class representations with a target class-geometry matrix while retaining the standard task loss. We instantiate the same objective across prototype recognition, few-shot biomedical object detection, open-set detection, novel-class insertion, and OWOD adaptation on COCO. Experiments show that CGS improves sample efficiency in recognition and ova detection, substantially strengthens novel-class insertion, and improves unknown recall on COCO while retaining much of the known-class detection performance. Ablations show that meaningful visual geometry provides the most reliable gains, while random geometry can help novel separation but is less consistent for few-shot detection. These results suggest that relational class geometry is an effective supervisory signal for building calibrated and extensible open-world detectors under limited supervision.
\end{abstract}


\section{Introduction}

Open-world object detection requires a model to detect known categories, identify unfamiliar objects, and incorporate new classes over time. This capability is especially important in biomedical and scientific imaging, where rare categories, emerging subtypes, and expert-defined labels often appear with only a handful of annotated examples. In these settings, detection cannot rely on large-scale retraining whenever new classes arise. Instead, models must support sample-efficient open-world detection: learning known categories from scarce examples, recognizing when objects fall outside the known label set, and inserting new categories with minimal supervision. While existing open-world detectors have made substantial progress under standard benchmark protocols, they are often evaluated in data-rich regimes and provide limited guidance for scarcely labeled data.

The central difficulty is that scarce examples provide an incomplete view of each class. This is particularly problematic in fine-grained domains, where visually similar categories differ only by subtle morphology, and where unknown objects may be mistaken for rare known classes or background. Prototype-based recognition and detection offer a natural starting point because they represent each class using a small support set. However, conventional prototype learning treats class prototypes as independent anchors: each class is pulled toward its examples and pushed away from others only through task loss. This ignores a key source of structure that is often available even when labels are scarce: classes have geometry. Some categories are visually, semantically, or morphologically close, while others are distant. Without this relational structure, prototype spaces can become poorly calibrated for open-world decisions.

In this work, we propose to use \textit{class geometry as supervision} for sample-efficient open-world detection. Rather than learning prototypes solely from scarce labeled examples, we constrain the inter-class prototype space to preserve a target geometry derived from class-level visual or semantic dissimilarities. In the biomedical setting, this geometry is estimated from training examples by computing average pixel-level differences between classes, capturing coarse morphology relationships among ova; we also study language-derived dissimilarities from visual class descriptions. As shown in Fig.~\ref{fig:intuition}, this provides supervision over how categories should be arranged: visually similar classes are encouraged to remain nearby, while dissimilar classes are separated. We realize this principle through a dissimilarity-preserving prototype objective, which aligns pairwise distances among class prototypes with a target visual or semantic class-dissimilarity matrix, complementing the standard recognition losses. Because the constraint acts on relationships among class prototypes rather than on individual embeddings, it organizes the class manifold while preserving intra-class variation. 

\begin{figure*}
    \centering
    \includegraphics[width=0.9\linewidth]{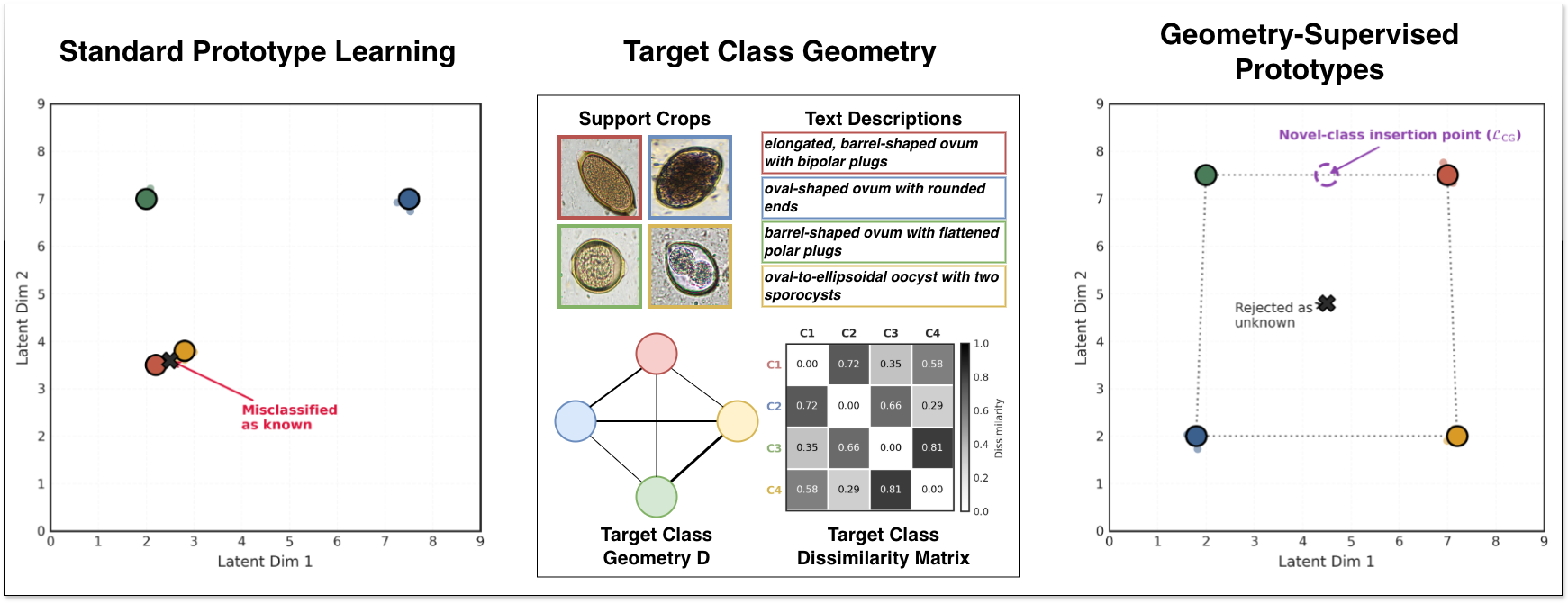}
\caption{
\textbf{Class geometry supervision.}
Standard prototype learning can produce poorly calibrated class anchors under scarce supervision, causing unknown objects to be misclassified.
We construct a training-only class geometry \(D\) from support-crop morphology and text descriptions, then use \(\mathcal{L}_{\mathrm{CG}}\) to align the learned prototype graph with this target geometry.
The resulting space better supports unknown rejection and novel-class insertion.
}
\label{fig:intuition}
\end{figure*}
We evaluate class-geometry supervision across recognition, biomedical detection, and COCO open-world detection. First, we apply CGS to ProtoNet-style recognition to isolate its effect on scarce-data prototype learning~\cite{trehan2026fsp}. Second, we integrate CGS into a prototype-based DETR detector for parasitic ova~\cite{trehan2026fsp}, where region features are matched to support-derived class prototypes for few-shot detection, open-set detection, and novel-class insertion. Third, we evaluate CGS on the standard OWOD/COCO protocol~\cite{Gupta2022} in two regimes: frozen-head adaptation over fixed OW-DETR features and full-model fine-tuning. Across these settings, CGS improves low-shot recognition and few-shot ova detection, yields large gains in novel and generalized novel-class insertion, and improves unknown recall on COCO. The COCO results reveal a tradeoff: frozen-head CGS substantially improves open-world sensitivity, while full-model fine-tuning reduces the mAP cost and preserves known-class performance.

Our contributions are four-fold:
(i) We formulate class-geometry supervision as a mechanism for sample-efficient open-world detection, where known-class detection, unknown discovery, and novel-class insertion must operate under limited labeled data.
(ii) We propose a dissimilarity-preserving prototype objective that aligns pairwise distances among learned class representations with visual or semantic class-dissimilarity structure, complementing standard matching and detection losses.
(iii) We instantiate this objective across prototype recognition and biomedical open-world detection, showing gains in low-shot recognition, few-shot detection, open-set detection, and novel-class insertion.
(iv) We evaluate CGS on COCO OWOD in frozen-head and full fine-tuning regimes, showing that class geometry improves unknown recall while exposing a controllable tradeoff with known-class mAP.

\section{Related Work}

\textbf{Open-world and open-vocabulary object detection} 
Object detection has evolved from restrictive closed-set models to advanced open-world (OWOD) and open-vocabulary (OVOD) paradigms capable of identifying unknown objects and incrementally learning new categories. While OWOD frameworks—building on foundational models like DETR \cite{Carion2020} have refined the separation of knowns and unknowns through advanced probabilistic and neutralization techniques \cite{Joseph2021,Gupta2022,zohar2023prob, xi2024ktcn}, OVOD models leverage large vision-language architectures like OWL-ViT \cite{Minderer2022} and its scaled successors \cite{minderer2023scaling} to detect novel items via text prompts, often utilizing multi-modal classifiers for improved zero-shot adaptability \cite{kaul2023multi}. However, a major limitation of both approaches is their heavy reliance on extensive labeled base data, complex pseudo-labeling, or computationally expensive end-to-end training pipelines. To overcome data barriers, we utilize class-level visual and semantic geometry to make open-world detection highly sample-efficient, seamlessly integrating new object classes under limited per-class supervision.

\textbf{Few-shot, prototype-based, and scarce-data detection}
Addressing the challenge of limited supervision, few-shot object detection (FSOD) seeks to recognize novel objects using only a handful of support examples. Foundational work in meta-learning, such as Prototypical Networks \cite{Snell2017}, established the paradigm of classifying instances based on their distance to learned class anchors or prototypes. This prototype-based approach has been extensively adapted for detection: early advancements utilized meta-learning frameworks like Meta R-CNN \cite{Yan2019} to aggregate support features, while subsequent research demonstrated the efficacy of two-stage fine-tuning \cite{Wang2020} and decoupled training pipelines like DeFRCN \cite{Qiao2021}. Additionally, methods such as FSCE \cite{Sun2021} have focused on refining the feature space through contrastive proposal encoding to better separate object classes. However, the key distinction is that prior prototype methods learn class anchors from support examples but do not explicitly constrain the relational geometry among the prototypes themselves. While few-shot detectors improve adaptation from limited support examples through meta-learning, fine-tuning, contrastive encoding, or prototype calibration, our approach structurally departs from this paradigm by supervising the prototype space using known class relationships, ensuring that scarce examples are inherently arranged in a more calibrated open-world geometry.

\textbf{Class-relational, semantic, and geometry-aware representation learning.} 
Moving beyond the assumption of independent classes, classical Multidimensional Scaling \cite{Kruskal1964} and modern contrastive learning paradigms \cite{Schroff2015,Khosla2020} structure representation spaces by mapping relationships to geometric constraints. Similarly, semantic structures have been explicitly modeled to improve generalization through zero-shot embeddings \cite{Liu2021}, hierarchy-aware losses \cite{Bertinetto2020}, and vision-language priors like CLIP \cite{Radford2021} and BioCLIP \cite{Stevens2024}. While these methods demonstrate that labels contain rich relational structure beyond one-hot identity, we build on this view by applying it directly to detector prototypes. Unlike prior open-world detectors that rely on high-data end-to-end training, or few-shot detectors that learn class anchors independently, we use class-level visual and semantic dissimilarities as explicit supervisory signals. By preserving this geometry during training, we structurally organize the prototype space for sample-efficient open-world detection.

\section{Proposed Approach}

\noindent\textbf{Problem Statement.}
We consider sample-efficient open-world detection, where a model must detect known classes, identify objects outside the current label set, and incorporate newly labeled classes over time. Let \(\mathcal{C}_t=\{1,\ldots,C_t\}\) denote the known classes at stage \(t\), and let \(\mathcal{U}_t\) denote unknown classes that may appear but are not yet annotated with class identities. For each known class \(c\in\mathcal{C}_t\), the model receives a small support set \(S_c=\{(I_{c,k},b_{c,k})\}_{k=1}^{K}\), where \(I_{c,k}\) is a support image, \(b_{c,k}\) is a support region or bounding box, and \(K\) is small. A detector produces candidate regions or object queries \(b\), and a region encoder maps each region to a feature \(z=f_\theta(I,b)\in\mathbb{R}^d\). Each known class is represented by a support-derived prototype \(p_c=|S_c|^{-1}\sum_{(I,b)\in S_c} f_\theta(I,b)\). A query region is classified by its similarity or distance to the current prototype set \(\mathcal{P}_t=\{p_c:c\in\mathcal{C}_t\}\), while regions that do not match any known prototype with sufficient confidence are treated as unknown. Standard prototype learning optimizes these prototypes only through a recognition or detection objective, treating them as independent class anchors. Under scarce supervision, this leaves the prototype space underconstrained: many prototype configurations can explain the limited support examples, but induce different behavior for unknown rejection and novel-class insertion.

We address this limitation by introducing class-level geometry supervision. Let \(D\in\mathbb{R}^{C_t\times C_t}\) be a target class-dissimilarity matrix, where \(D_{ij}\) encodes the visual or semantic dissimilarity between classes \(i\) and \(j\). In our biomedical setting, we estimate visual geometry from training examples as
\begin{equation}
D^{\mathrm{vis}}_{ij}
=
\frac{1}{|S_i||S_j|}
\sum_{(I,b)\in S_i}
\sum_{(I',b')\in S_j}
\operatorname{MSE}\!\left(\phi(I,b),\phi(I',b')\right),
\end{equation}
where \(\phi(I,b)\) denotes the cropped and resized image region corresponding to box \(b\). This captures coarse pixel-level morphology differences between classes. We also consider language-derived geometry \(D^{\mathrm{txt}}_{ij}=1-\cos(e_i,e_j)\), where \(e_i\) and \(e_j\) are text embeddings of visual descriptions for classes \(i\) and \(j\). We use \(D\) as a nonnegative dissimilarity matrix, not necessarily a metric; the objective only requires pairwise relational targets. The matrix \(D\) is computed only from training/support information and provides a relational prior over the label space. 
The learned prototype space induces its own class-distance matrix \(G\), with entries \(G_{ij}=d(p_i,p_j)\), where \(d(\cdot,\cdot)\) is a distance in the prototype space, such as Euclidean or cosine distance. To compare distances that may have different scales, we normalize the off-diagonal entries as \(\hat{G}_{ij}=(G_{ij}-\mu_G)/\sigma_G\) and \(\hat{D}_{ij}=(D_{ij}-\mu_D)/\sigma_D\), where \(\mu_G,\sigma_G\) and \(\mu_D,\sigma_D\) are computed over class pairs \(i<j\). Let \(\mathcal{E}=\{(i,j):1\leq i<j\leq C_t\}\). We define the class-geometry loss as
\begin{equation}
\mathcal{L}_{\mathrm{CG}}
=
\frac{1}{|\mathcal{E}|}
\sum_{(i,j)\in\mathcal{E}}
\left(
\hat{G}_{ij}-\hat{D}_{ij}
\right)^2.
\end{equation}
The full objective is \(\mathcal{L}=\mathcal{L}_{\mathrm{task}}+\lambda\mathcal{L}_{\mathrm{CG}}\), where \(\mathcal{L}_{\mathrm{task}}\) denotes the standard prototype matching, classification, or detection loss, and \(\lambda\) controls class-geometry supervision strength.

\noindent\textbf{What Does Class-Geometry Supervision Control?}
The proposed objective aligns two complete weighted graphs over the class set: a target class graph with edge weights \(\hat{D}_{ij}\), encoding visual or semantic dissimilarity, and a learned prototype graph with edge weights \(\hat{G}_{ij}\), encoding distances between learned class prototypes. The following proposition shows that minimizing \(\mathcal{L}_{\mathrm{CG}}\) directly controls the average distortion between these graphs.

\begin{proposition}[Prototype-graph distortion bound]
If \(\mathcal{L}_{\mathrm{CG}}\leq\epsilon\), then the average absolute distortion between the learned prototype graph and the target class-dissimilarity graph satisfies
\[
\frac{1}{|\mathcal{E}|}
\sum_{(i,j)\in\mathcal{E}}
\left|
\hat{G}_{ij}-\hat{D}_{ij}
\right|
\leq
\sqrt{\epsilon}.
\]
\end{proposition}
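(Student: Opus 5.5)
The bound follows from the Cauchy--Schwarz inequality, equivalently from Jensen's inequality for the concave square root. Write \(x_{ij}=\hat{G}_{ij}-\hat{D}_{ij}\) for \((i,j)\in\mathcal{E}\) and \(m=|\mathcal{E}|\). Then the hypothesis reads \(\frac{1}{m}\sum_{(i,j)\in\mathcal{E}} x_{ij}^2=\mathcal{L}_{\mathrm{CG}}\leq\epsilon\), and the claim is that the empirical mean of \(|x_{ij}|\) is at most \(\sqrt{\epsilon}\). In other words, the \(\ell_1\) average of a finite vector is bounded by its root-mean-square.

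First, apply Cauchy--Schwarz to the vectors \((|x_{ij}|)_{(i,j)\in\mathcal{E}}\) and \((1)_{(i,j)\in\mathcal{E}}\) in \(\mathbb{R}^m\):
\[
\sum_{(i,j)\in\mathcal{E}}|x_{ij}|\leq\Bigl(\sum_{(i,j)\in\mathcal{E}}x_{ij}^2\Bigr)^{1/2}\sqrt{m}.
\]
Second, divide both sides by \(m\). This gives
\[
\frac{1}{m}\sum|x_{ij}|\leq\Bigl(\frac{1}{m}\sum x_{ij}^2\Bigr)^{1/2}=\sqrt{\mathcal{L}_{\mathrm{CG}}}.
\]
Third, since the square root is monotone, \(\mathcal{L}_{\mathrm{CG}}\leq\epsilon\) yields \(\sqrt{\mathcal{L}_{\mathrm{CG}}}\leq\sqrt{\epsilon}\), which is the claim.

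There is no substantive obstacle. The only care needed is on well-definedness. The normalized entries \(\hat{G}_{ij}\) and \(\hat{D}_{ij}\) require \(\sigma_G,\sigma_D>0\), and \(\mathcal{E}\) must be nonempty, i.e.\ \(C_t\geq 2\); otherwise \(\mathcal{L}_{\mathrm{CG}}\) itself is undefined. I would state these as standing assumptions inherited from the definition of \(\mathcal{L}_{\mathrm{CG}}\). Note that the normalization plays no role in the inequality itself: the argument holds for arbitrary real targets and predictions. The bound is tight when all \(|x_{ij}|\) are equal.
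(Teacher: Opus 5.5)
Your proposal is correct and matches the paper's proof: Cauchy--Schwarz bounds the mean absolute distortion by $\sqrt{\mathcal{L}_{\mathrm{CG}}}$, and monotonicity of the square root then gives $\sqrt{\epsilon}$. Your well-definedness conditions ($C_t\geq 2$ and $\sigma_G,\sigma_D>0$) are a reasonable addition, since the paper leaves them implicit.
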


Proposition~1 shows that \(\mathcal{L}_{\mathrm{CG}}\) is not an unconstrained regularizer: it explicitly minimizes average distortion between the learned prototype geometry and the target class geometry. While this controls average graph distortion, preserving local neighborhood topology requires sufficiently small pairwise distortion for the relevant class pairs. The next proposition formalizes this margin condition.

\begin{proposition}[Neighborhood-order preservation]
Suppose the target class geometry indicates that class \(j\) is closer to class \(i\) than class \(k\) by margin \(\gamma>0\), i.e., \(\hat{D}_{ij}+\gamma\leq\hat{D}_{ik}\). If the learned prototype distances satisfy \(|\hat{G}_{ij}-\hat{D}_{ij}|\leq\gamma/2\) and \(|\hat{G}_{ik}-\hat{D}_{ik}|\leq\gamma/2\), then the learned prototype space preserves this neighborhood relation:
$\hat{G}_{ij}\leq\hat{G}_{ik}$.
\end{proposition}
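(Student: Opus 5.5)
The plan is a direct triangle-inequality argument on the two relevant class pairs; Proposition~1 is not needed, since the hypotheses are already stated pairwise.

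First, unpack the two distortion hypotheses into one-sided bounds. From \(|\hat{G}_{ij}-\hat{D}_{ij}|\leq\gamma/2\), keep only the upper side: \(\hat{G}_{ij}\leq\hat{D}_{ij}+\gamma/2\). From \(|\hat{G}_{ik}-\hat{D}_{ik}|\leq\gamma/2\), keep only the lower side: \(\hat{G}_{ik}\geq\hat{D}_{ik}-\gamma/2\).

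Second, chain these with the margin assumption \(\hat{D}_{ij}+\gamma\leq\hat{D}_{ik}\):
\[
\hat{G}_{ij}\leq\hat{D}_{ij}+\tfrac{\gamma}{2}\leq\hat{D}_{ik}-\gamma+\tfrac{\gamma}{2}=\hat{D}_{ik}-\tfrac{\gamma}{2}\leq\hat{G}_{ik}.
\]
This gives \(\hat{G}_{ij}\leq\hat{G}_{ik}\), which is the claim. The same chain gives equality in the middle whenever the margin is tight.

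The only points needing care are bookkeeping. The pairs \((i,j)\) and \((i,k)\) must be read as unordered off-diagonal pairs, so entries with \(j<i\) are interpreted symmetrically as entries of \(\mathcal{E}\). The standardization of \(G\) and \(D\) plays no role beyond fixing the scale on which \(\gamma\) is measured.

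There is no real obstacle here. If anything deserves comment, it is that the conclusion is non-strict. A strict inequality \(\hat{G}_{ij}<\hat{G}_{ik}\) would follow if either distortion bound were strict. Because the normalization is an increasing affine map with \(\sigma_G>0\), the ordering of the standardized entries \(\hat{G}\) transfers to the raw prototype distances \(G\).
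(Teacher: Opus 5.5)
Your proof is correct and is essentially the paper's argument: the paper uses the same chain $\hat{G}_{ij}\leq\hat{D}_{ij}+\gamma/2\leq\hat{D}_{ik}-\gamma/2\leq\hat{G}_{ik}$, taking the upper side of the first distortion bound and the lower side of the second. One small point of wording: this is a chain of one-sided bounds rather than a ``triangle-inequality argument,'' but that does not affect correctness.
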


Proposition~2 gives a useful interpretation for scarce-data and incremental settings. If the target class geometry has a sufficient margin between nearby and distant classes, then a low-distortion prototype space preserves that neighborhood order. Thus, newly introduced classes can be placed into an existing prototype space according to their visual or semantic relationships with known classes, rather than being learned as isolated anchors. Proofs are provided in the supplementary.

\begin{algorithm}[t]
\caption{Class-Geometry Supervised Prototype Learning}
\label{alg:cg_prototype_learning}
\begin{algorithmic}[1]
\REQUIRE Support sets $\{S_{c}\}_{c=1}^{C_t}$, feature extractor $f_{\theta}$, task loss $\mathcal{L}_{\text{task}}$, geometry weight $\lambda$
\REQUIRE Class dissimilarity matrix $D \in \mathbb{R}^{C_t \times C_t}$
\ENSURE Geometry-supervised prototype set $\mathcal{P}_t = \{p_{c}\}_{c=1}^{C_t}$
\STATE Normalize off-diagonal entries of $D$ to obtain target geometry $\hat{D}$: $\hat{D}_{ij} \leftarrow (D_{ij} - \mu_D)/\sigma_D$
\FOR{each training episode or minibatch}
    \STATE Compute support class prototypes:
    \begin{equation*}
        p_{c} \leftarrow \frac{1}{|S_{c}|}\sum_{(I,b) \in S_{c}} f_{\theta}(I,b), \quad \forall c \in \{1, \dots, C_t\}
    \end{equation*}
    \STATE Form learned prototype distance matrix $G$ with entries $G_{ij} = d(p_{i}, p_{j})$
    \STATE Normalize off-diagonal entries of $G$ to obtain $\hat{G}$: $\hat{G}_{ij} \leftarrow (G_{ij} - \mu_G)/\sigma_G$
    \STATE Compute dissimilarity-preserving loss:
    \begin{equation*}
        \mathcal{L}_{\text{CG}} \leftarrow \frac{1}{|\mathcal{E}|}\sum_{(i,j)\in\mathcal{E}}(\hat{G}_{ij}-\hat{D}_{ij})^{2}
    \end{equation*}
    \STATE Update parameters by minimizing $\mathcal{L} \leftarrow \mathcal{L}_{\text{task}} + \lambda \mathcal{L}_{\text{CG}}$
\ENDFOR
\STATE Use $\mathcal{P}_t$ for nearest-prototype prediction, unknown rejection, or novel-class insertion
\RETURN $\mathcal{P}_t$
\end{algorithmic}
\end{algorithm}

\subsubsection{Open-World Decisions with Prototype Geometry.}

The propositions above show that \(\mathcal{L}_{\mathrm{CG}}\) controls the distortion between the learned prototype graph and the target class-dissimilarity graph. We now describe how this geometry-supervised prototype graph is used for open-world decisions. 
Algorithm~\ref{alg:cg_prototype_learning} illustrates the process. 
Given a query region feature \(z=f_\theta(I,b)\), we compute its distance to each known-class prototype \(p_c\in\mathcal{P}_t\). The predicted known label is
\[
\hat{y}
=
\arg\min_{c\in\mathcal{C}_t}
d(z,p_c).
\]
In our implementation, the open-world score is the nearest-prototype distance: 
$m(z){=}\min_{c\in\mathcal{C}_t} d(z,p_c)$.
A region is assigned to its nearest known class only if this distance is below a global validation-selected rejection threshold \(\tau_{\mathrm{unk}}\); otherwise it is marked as unknown:
\[
\mathrm{unknown}(z)
=
\mathbb{I}
\left[
m(z) > \tau_{\mathrm{unk}}
\right].
\]
Thus, unknown rejection depends on the calibration of distances between query features and the known prototype space. 
Class-geometry supervision improves this decision rule by organizing known prototypes according to visual or semantic relationships. If prototypes are learned as independent anchors, nearest-prototype distances may be poorly calibrated: visually similar classes may be separated arbitrarily, while dissimilar classes may collapse. In contrast, minimizing \(\mathcal{L}_{\mathrm{CG}}\) reduces distortion between the learned prototype graph and the target class geometry, making the known-class manifold more structured. This is especially important for open-world detection, where unknown regions are identified by their failure to fit the known prototype geometry.

The same structure supports novel-class insertion. When a set of newly labeled classes \(\Delta\mathcal{C}\) becomes available at stage \(t+1\), we compute support-derived prototypes \(p_c\) for \(c\in\Delta\mathcal{C}\) and expand the prototype set as
$\mathcal{P}_{t+1} {=} \mathcal{P}_t \cup \{p_c:c\in\Delta\mathcal{C}\}$.
In the incremental setting, existing prototypes in \(\mathcal{P}_t\) are kept fixed unless explicitly fine-tuned, and only the newly introduced class prototypes are added from their limited support examples. The class-dissimilarity matrix is similarly expanded to include relationships between new and existing classes. The geometry objective then encourages new prototypes to be inserted according to their visual or semantic relationships with the previous class set, rather than being learned as isolated anchors. This connects directly to Proposition~2: when the target geometry has a sufficient margin between nearby and distant classes, low pairwise distortion preserves the intended neighborhood order after insertion.

\begin{figure*}
    \includegraphics[width=\linewidth]{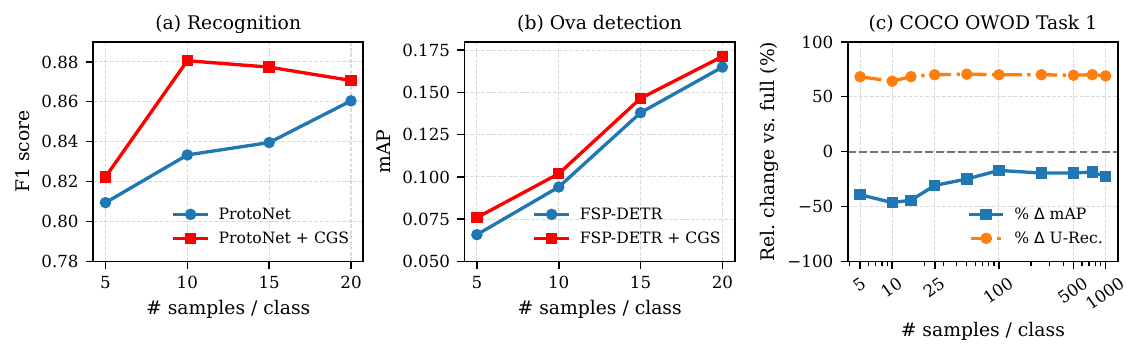}
    \caption{
\textbf{Sample efficiency across recognition and detection.}
CGS improves performance across three increasingly challenging settings.
(a) In prototype recognition, CGS improves low-shot classification.
(b) In ova detection, CGS consistently improves FSP-DETR mAP.
(c) On COCO Task~1, we freeze detector features and fine-tune only the classification MLP head with \(K\) labeled examples per class; we report relative change with respect to the Task 1 full OW-DETR reference. 
}
\label{fig:sample_efficiency}
\end{figure*}

\subsubsection{Applying Class-Geometry Supervision.}

The formulation above only assumes that the model produces class prototypes or class-specific embeddings. We therefore apply the same class-geometry objective in three settings: prototype recognition, prototype-based biomedical detection, and open-world detector adaptation. 

\noindent\textbf{Prototype recognition.}
We first apply class-geometry supervision to ProtoNet-style recognition to isolate its effect on scarce-data prototype learning. Given support examples for each class, prototypes are computed as class centroids in the embedding space and query images are classified by nearest-prototype matching. In this setting, \(\mathcal{L}_{\mathrm{task}}\) is the standard prototype matching loss, and \(\mathcal{L}_{\mathrm{CG}}\) shapes only the inter-class prototype geometry. This experiment evaluates whether preserving visual or semantic class relationships improves recognition before introducing detection-specific factors.

\noindent\textbf{Prototype-based biomedical detection.}
We next apply the same objective to scarce-data biomedical detection. A DETR-style detector produces object queries and region features, and class labels are assigned by matching each region feature to support-derived class prototypes. The objective combines localization, matching, and CGS supervision:
\[
\mathcal{L}
=
\mathcal{L}_{\mathrm{det}}
+
\mathcal{L}_{\mathrm{proto}}
+
\lambda\mathcal{L}_{\mathrm{CG}}.
\]
Here, \(\mathcal{L}_{\mathrm{CG}}\) does not alter the localization objective; it structures the class prototype space used for known-class recognition, unknown rejection, and novel-class insertion. We use this instantiation for few-shot ova detection, open-set detection, and incremental 15-to-20 class learning.

\noindent\textbf{Open-world detector adaptation.}
Finally, to test whether class-geometry supervision transfers beyond the biomedical benchmark, we apply it to the OW-DETR/COCO protocol. We evaluate two adaptation regimes. In the frozen-head setting, OW-DETR detector features are fixed and only a lightweight two-layer classification head is fine-tuned with the CGS objective applied to the induced class representations. In the full fine-tuning setting, the detector is optimized end-to-end with the same class-geometry loss added to the standard OW-DETR training objective. These settings allow us to test whether relational class supervision can improve unknown discovery at the classifier level alone, and whether full model adaptation can reduce the known-class mAP tradeoff while retaining the open-world benefit.

\noindent\textbf{Implementation Details.}
All models use the same image preprocessing, support-set construction, and train/test splits as the corresponding baseline unless otherwise stated. For recognition, we train a ProtoNet backbone with the standard episodic matching loss and add \(\mathcal{L}_{\mathrm{CG}}\) over episode prototypes using \(\lambda\in\{\mathrm{0.05, 0.01, 0.1, 0.5, 1}\}\). For biomedical detection, we initialize a DETR model and apply the geometry loss after training using the vanilla DETR loss, using support-derived prototypes from \(K\in\{5,10,15,20\}\) examples per class. Visual dissimilarities are computed from cropped and resized support regions at resolution \(224\times 224\); text dissimilarities use MPNet \cite{Songetal2020} embeddings of Google Gemini \cite{geminiteam2024gemini15} generated visual descriptions. 
For OW-DETR/COCO, we evaluate CGS in two regimes: frozen-head adaptation, where OW-DETR features are fixed and a two-layer classification MLP is fine-tuned, and full fine-tuning.
Thresholds for unknown rejection, stress weights, and normalization hyperparameters are selected on the validation split; additional details in supplementary. 

\section{Experimental Evaluation}

\begin{table}[t]
\centering
\setlength{\tabcolsep}{2pt}
\resizebox{\linewidth}{!}{
\begin{tabular}{lcc|cc}
\toprule
\multirow{2}{*}{\textbf{Method}} 
& \multicolumn{2}{c|}{\textbf{Few-shot Detection mAP}} 
& \multicolumn{2}{c}{\textbf{Open-set Detection}} \\
\cmidrule(lr){2-3}
\cmidrule(lr){4-5}
& \textbf{5-shot} & \textbf{10-shot} 
& \textbf{mAP} & \textbf{mAR} \\
\midrule
\multicolumn{5}{c}{\textit{Generic detectors}} \\
\midrule
DETR
& 0.026 & 0.054 & N/A & N/A- \\
YOLO
& 0.000 & 0.021 & N/A & N/A \\
Stable-DINO
& 0.002 & 0.011 & N/A & N/A \\
FCOS
& 0.001 & 0.016 & N/A & N/A \\

\midrule
\multicolumn{5}{c}{\textit{Prototype / few-shot baselines}} \\
\midrule
ProtoNet + SS
& 0.036 & 0.045 & 0.003 & 0.026 \\
ProtoKD + SS
& 0.039 & 0.048 & 0.031 & 0.109 \\
FSP-DETR
& 0.066 & 0.094 & 0.045 & \textbf{0.141} \\

\midrule
\rowcolor{gray!8}
FSP-DETR + CGS 
& \textbf{0.076 \textcolor{green!60!black}{\(\uparrow\)}} 
& \textbf{0.102 \textcolor{green!60!black}{\(\uparrow\)}} 
& \textbf{0.061 \textcolor{green!60!black}{\(\uparrow\)}} 
& 0.136 \textbf{ \textcolor{red!70!black}{\(\downarrow\)}} \\
\bottomrule
\end{tabular}
}
\caption{
\textbf{Ova detection under scarce-data and open-set settings.}
Arrows indicate improvement/decline relative to FSP-DETR without CGS.
N/A: model not tailored for task.
}
\label{tab:ova_detection_main}
\end{table} 

\textbf{Tasks and Datasets.} We evaluate class-geometry supervision across three complementary settings designed to test whether relational class structure improves sample-efficient open-world learning. First, we use a parasitic ova benchmark~\cite{trehan2026fsp} for low-shot prototype recognition and few-shot detection, where fine-grained categories must be learned from limited support crops or bounding boxes. Second, on the same ova dataset, we evaluate open-set detection and novel-class insertion, testing whether geometry-supervised prototypes improve unknown rejection and the integration of newly labeled classes. Third, we evaluate on the standard OW-DETR/COCO \cite{Gupta2022} open-world detection protocol to test whether the same idea transfers beyond biomedical imagery, including both frozen-head adaptation and full-model fine-tuning regimes. Across these settings, we report recognition F1/accuracy, detection mAP and mAR, unknown recall, and seen/novel-class performance.

\noindent\textbf{Metrics and Baselines.}
We use task-standard metrics for each evaluation setting and compare against baselines matched to each task. For prototype-based recognition, we report macro-F1 over query examples and compare against a standard ProtoNet~\cite{Snell2017} trained with the same episodic protocol but without class-geometry supervision. For ova detection, we report mean Average Precision (mAP) and mean Average Recall (mAR), comparing against generic detectors including DETR~\cite{Carion2020}, YOLO~\cite{jocher2020ultralytics}, Stable-DINO~\cite{liu2023detection}, and FCOS~\cite{tian2019fcos}, as well as prototype/few-shot baselines such as ProtoNet+SS~\cite{Snell2017} and ProtoKD+SS~\cite{trehan2023protokd}. Our primary controlled comparison is FSP-DETR~\cite{trehan2026fsp}, the strongest prototype-based detector baseline, with and without CGS. For open-set detection, mAP and mAR are computed with the unknown category included, and for novel-class insertion we report mAP and mAR under \textit{novel-only}, \textit{generalized novel}, and \textit{seen retention} settings using the same known/novel splits. For COCO open-world detection, we follow the standard OWOD protocol~\cite{Joseph2021}, reporting unknown recall (U-Recall) and mAP over previously known, currently introduced, and all known classes, and compare against ORE~\cite{Joseph2021}, UC-OWOD~\cite{Wu2022}, OW-DETR~\cite{Gupta2022}, Fast-OWDETR~\cite{chen2022fastowdetr}, OCPL~\cite{Yu2023}, and RE-OWOD~\cite{Zhao2024}. 

\begin{table}[t]
\centering
\resizebox{\linewidth}{!}{
\begin{tabular}{llcc}
\toprule
\textbf{Setting} & \textbf{Method} & \textbf{mAP} & \textbf{mAR} \\
\midrule
\multirow{4}{*}{Novel-only}
& ProtoNet + SS & 0.0486 & 0.2146 \\
& ProtoKD + SS & 0.0532 & 0.2134 \\
& FSP-DETR & 0.0629 & 0.2595 \\
\rowcolor{gray!8}
& FSP-DETR + CGS & \textbf{0.1558 \textcolor{green!60!black}{\(\uparrow\)}} & \textbf{0.3728 \textcolor{green!60!black}{\(\uparrow\)}} \\
\midrule
\multirow{4}{*}{Generalized novel}
& ProtoNet + SS & 0.0340 & 0.1710 \\
& ProtoKD + SS & 0.0520 & 0.1960 \\
& FSP-DETR & 0.0569 & 0.2138 \\
\rowcolor{gray!8}
& FSP-DETR + CGS & \textbf{0.1348 \textcolor{green!60!black}{\(\uparrow\)}} & \textbf{0.3248 \textcolor{green!60!black}{\(\uparrow\)}} \\
\midrule
\multirow{4}{*}{Seen retention}
& ProtoNet + SS & 0.0033 & 0.0236 \\
& ProtoKD + SS & 0.0380 & 0.1107 \\
& FSP-DETR & 0.0566 & 0.1173 \\
\rowcolor{gray!8}
& FSP-DETR + CGS & \textbf{0.0670 \textcolor{green!60!black}{\(\uparrow\)}} & \textbf{0.1259 \textcolor{green!60!black}{\(\uparrow\)}} \\
\bottomrule
\end{tabular}
}
\caption{
\textbf{Novel-class insertion.}
CGS yields the largest gains when new classes are inserted into the prototype space.
}
\label{tab:novel_insertion_cgs}
\end{table}

\noindent\textbf{Sample Efficiency Across Tasks.} 
\label{sec:sample_efficiency} 
We first evaluate whether class-geometry supervision improves sample efficiency across recognition, detection, and open-world adaptation. Figure~\ref{fig:sample_efficiency} shows that CGS improves ProtoNet recognition and FSP-DETR ova detection across support sizes, with the largest gains in the lowest-shot regimes. This indicates that the benefit is not detection-specific: when class prototypes are estimated from few examples, preserving inter-class geometry provides additional supervision that helps organize the class space before more data are available. In ova detection, this translates into consistent mAP gains over FSP-DETR without changing the detector backbone, suggesting that CGS improves the use of scarce support examples rather than merely increasing model capacity. On COCO Task~1, we freeze OW-DETR features and fine-tune only the classification head with \(K\) labeled examples per class; Fig.~\ref{fig:sample_efficiency}(c) reports relative change with respect to the full OW-DETR reference. CGS consistently increases unknown recall relative to the full model, although with lower known-class mAP. Thus, CGS shifts the adaptation toward open-world sensitivity: it is most useful when the goal is not only to maximize known-class accuracy, but to improve the model's ability to detect objects outside the current label set under limited supervision.

\begin{table}[t]
\centering
\small
\setlength{\tabcolsep}{5pt}
\begin{tabular}{lccc}
\toprule
\textbf{Variant} & \textbf{5-shot} & \textbf{10-shot} & \textbf{Novel-only} \\
\midrule
No CGS 
& 0.0658 & 0.0940 & 0.0629 \\
Random \(D\) 
& 0.0710 & 0.0975 & \textbf{0.1657} \\
Text \(D_{\mathrm{txt}}\) 
& 0.0717 & 0.0989 & 0.1476 \\
Visual + Text 
& 0.0741 & 0.0984 & 0.1581 \\
\rowcolor{gray!8}
Visual \(D_{\mathrm{vis}}\) 
& \textbf{0.0759} & \textbf{0.1019} & {0.1558} \\
\bottomrule
\end{tabular}
\caption{
\textbf{Effect of class-geometry source on ova detection.}
We compare different target class geometries used by CGS. 
}
\label{tab:ablation_geometry_source}
\end{table}

\begin{table*}[t]
\centering
\small
\setlength{\tabcolsep}{3.2pt}
\resizebox{\textwidth}{!}{
\begin{tabular}{lcc|cc|cccc|cccc|ccc}
\toprule
\multirow{3}{*}{\textbf{Method}}
& \multirow{3}{*}{\textbf{Trainable}}
& \multirow{3}{*}{\textbf{Data}}
& \multicolumn{2}{c|}{\textbf{Task 1}}
& \multicolumn{4}{c|}{\textbf{Task 2}}
& \multicolumn{4}{c|}{\textbf{Task 3}}
& \multicolumn{3}{c}{\textbf{Task 4}} \\
\cmidrule(lr){4-5}
\cmidrule(lr){6-9}
\cmidrule(lr){10-13}
\cmidrule(lr){14-16}
&
&
& \multirow{2}{*}{\textbf{U-Rec.}}
& \textbf{mAP}
& \multirow{2}{*}{\textbf{U-Rec.}}
& \multicolumn{3}{c|}{\textbf{mAP}}
& \multirow{2}{*}{\textbf{U-Rec.}}
& \multicolumn{3}{c|}{\textbf{mAP}}
& \multicolumn{3}{c}{\textbf{mAP}} \\
\cmidrule(lr){5-5}
\cmidrule(lr){7-9}
\cmidrule(lr){11-13}
\cmidrule(lr){14-16}
&
&
& 
& \textbf{Curr.}
&
& \textbf{Prev.} & \textbf{Curr.} & \textbf{Both}
&
& \textbf{Prev.} & \textbf{Curr.} & \textbf{Both}
& \textbf{Prev.} & \textbf{Curr.} & \textbf{Both} \\
\midrule
ORE~\cite{Joseph2021}
& full & full
& 4.9 & 56.0
& 2.9 & 52.7 & 26.0 & 39.4
& 3.9 & 38.2 & 12.7 & 29.7
& 29.6 & 12.4 & 25.3 \\

UC-OWOD~\cite{Wu2022}
& full & full
& 75.4 & 56.4
& 68.0 & 33.1 & 30.5 & 46.1
& 53.3 & 28.8 & 16.3 & 28.0
& 25.6 & 15.9 & 26.7 \\

Fast-OWDETR~\cite{chen2022fastowdetr}
& full & full
& 9.2 & 56.6
& 8.8 & -- & -- & 39.4
& 7.8 & -- & -- & 32.2
& -- & -- & 25.0 \\

OCPL~\cite{Yu2023}
& full & full
& 8.3 & 56.6
& 7.7 & 50.7 & 27.5 & 39.1
& {11.9} & 38.6 & 14.7 & 30.7
& 30.8 & 14.4 & 26.7 \\

RE-OWOD~\cite{Zhao2024}
& full & full
& 17.9 & {61.3}
& {11.8} & 55.6 & 37.9 & {46.6}
& 9.9 & 46.5 & 31.3 & {38.9}
& 38.9 & 19.8 & {33.9} \\

OW-DETR~\cite{Gupta2022}
& full & full
& 7.5 & 59.2
& 6.2 & 53.6 & 33.5 & 42.9
& 5.7 & 38.3 & 15.8 & 30.8
& 31.4 & 17.1 & 27.8 \\

\midrule
\rowcolor{gray!8}
Frozen OW-DETR + CGS
& head & full
& \textbf{11.9}\textbf{\textcolor{green!60!black}{\(\uparrow\)}} & \textbf{62.2}\textbf{\textcolor{green!60!black}{\(\uparrow\)}}
& \textbf{9.1}\textbf{\textcolor{green!60!black}{\(\uparrow\)}} & \textbf{53.3}\textbf{\textcolor{red!70!black}{\(\downarrow\)}} & \textbf{19.4} \textbf{\textcolor{red!70!black}{\(\downarrow\)}}& \textbf{35.6}\textbf{\textcolor{red!70!black}{\(\downarrow\)}}
& \textbf{9.8}\textbf{\textcolor{green!60!black}{\(\uparrow\)}} & \textbf{37.4}\textbf{\textcolor{red!70!black}{\(\downarrow\)}} & \textbf{16.8}\textbf{\textcolor{green!60!black}{\(\uparrow\)}}  & \textbf{30.5}\textbf{\textcolor{red!70!black}{\(\downarrow\)}}
& \textbf{30.7}\textbf{\textcolor{red!70!black}{\(\downarrow\)}} & \textbf{21.4}\textbf{\textcolor{green!60!black}{\(\uparrow\)}} & \textbf{28.4}\textbf{\textcolor{green!60!black}{\(\uparrow\)}} \\

\rowcolor{gray!8}
OW-DETR + CGS
& full & full
& \textbf{7.5}\textbf{\textcolor{green!60!black}{\(\uparrow\)}} & \textbf{59.5}\textbf{\textcolor{green!60!black}{\(\uparrow\)}}
& \textbf{9.8}\textbf{\textcolor{green!60!black}{\(\uparrow\)}} & \textbf{54.2}\textbf{\textcolor{green!60!black}{\(\uparrow\)}} & \textbf{28.2}\textbf{\textcolor{red!70!black}{\(\downarrow\)}} & \textbf{40.2}\textbf{\textcolor{red!70!black}{\(\downarrow\)}}
& \textbf{8.2}\textbf{\textcolor{green!60!black}{\(\uparrow\)}} & \textbf{41.5}\textbf{\textcolor{green!60!black}{\(\uparrow\)}} & \textbf{21.2}\textbf{\textcolor{green!60!black}{\(\uparrow\)}} & \textbf{34.7}\textbf{\textcolor{green!60!black}{\(\uparrow\)}}
& \textbf{34.5}\textbf{\textcolor{green!60!black}{\(\uparrow\)}} & \textbf{24.6}\textbf{\textcolor{green!60!black}{\(\uparrow\)}} & \textbf{32.0}\textbf{\textcolor{green!60!black}{\(\uparrow\)}} \\

\bottomrule
\end{tabular}
}
\caption{
\textbf{OWOD evaluation on COCO}.
We report U-Rec. and mAP over previously known, currently known, and all known classes.
Arrows show improvement or decline relative to OW-DETR. CGS results are bolded.
}
\label{tab:coco_owod_sota}
\end{table*}

\noindent\textbf{Scarce-Data and Open-Set Ova Detection.}
\label{sec:ova_detection_results} 
Table~\ref{tab:ova_detection_main} compares CGS with generic detectors, prototype/few-shot baselines, and the strongest prototype detector baseline on ova detection. In the few-shot setting, generic detectors perform poorly under limited supervision, and prototype-based methods provide stronger performance. Among these, FSP-DETR is the strongest baseline, achieving 0.066 mAP in the 5-shot setting and 0.094 mAP in the 10-shot setting. Adding class-geometry supervision improves FSP-DETR to 0.076 and 0.102 mAP, respectively. These gains indicate that CGS improves the class prototype space without changing the detector backbone, with the largest relative improvement occurring in the most data-scarce 5-shot regime. 
We also evaluate open-set ova detection, where the model must reject objects outside the known class set. CGS improves open-set mAP from 0.045 to 0.061 over FSP-DETR, while maintaining comparable recall. This suggests that organizing the known-class prototype manifold with visual class geometry improves the nearest-prototype distance used for unknown rejection. The slight decrease in mAR reflects a precision--recall tradeoff: CGS is more selective in assigning detections as ``unknown'', resulting in higher mAP at lower recall.

\noindent\textbf{Novel-Class Insertion.} 
\label{sec:novel_insertion_results} 
Table~\ref{tab:novel_insertion_cgs} evaluates whether class-geometry supervision helps insert newly labeled classes into an existing prototype space. We consider three settings: \textit{novel-only}, where only the newly introduced classes are evaluated; \textit{generalized novel}, where all prototypes are available but performance is measured on novel classes; and \textit{seen retention}, where performance is measured on previously known classes after insertion. CGS provides substantial gains in the novel-only setting, improving mAP from 0.0629 to 0.1558 and mAR from 0.2595 to 0.3728. The gains remain large in the generalized novel setting, where mAP improves from 0.0569 to 0.1348 and mAR improves from 0.2138 to 0.3248. These results indicate that CGS helps newly introduced prototypes occupy meaningful positions in the existing class manifold rather than being learned as isolated anchors. 
CGS also improves seen-class retention, increasing mAP from 0.0566 to 0.0670 and mAR from 0.1173 to 0.1259. This suggests that the benefit of class geometry is not limited to novel classes; organizing the prototype space also helps preserve discrimination among previously known categories.

\noindent\textbf{OWOD Adaptation on COCO.}
\label{sec:coco_owod_results}
Table~\ref{tab:coco_owod_sota} evaluates whether class-geometry supervision transfers beyond the ova benchmark to the standard OWOD protocol on COCO. Rather than treating COCO only as a closed-set mAP benchmark, we use it to measure the tradeoff between known-class detection and unknown discovery. We compare OW-DETR with two CGS variants: a frozen-head setting, where only the classification head is updated over fixed OW-DETR features, and a full fine-tuning setting, where the entire model is optimized with the CGS objective. 
The results show that CGS consistently shifts the detector toward stronger open-world sensitivity. In the frozen-head setting, CGS substantially improves U-Recall over OW-DETR across Tasks~1--3, showing that reorganizing the class space at the head level is sufficient to make the detector more responsive to objects outside the known label set. This comes with some loss in known-class mAP, especially on later tasks, which is expected because the backbone and localization components are fixed while the head is adapted toward a more geometry-aware class space. 
Full-model CGS fine-tuning provides a lower-tradeoff operating point. Compared with head-only adaptation, it preserves more of the known-class mAP while still improving U-Recall on later tasks. This suggests that when the full detector is allowed to adapt, the model can partially recover known-class discrimination while retaining the open-world benefit of geometry supervision. CGS does not simply maximize mAP, but offers a transferable way to improve unknown discovery with a controllable mAP--U-Recall tradeoff. Figure~\ref{fig:coco_qual_main} illustrates examples to highlight this tradeoff qualitatively.

\noindent\textbf{Source of Class Geometry.}
\label{sec:ablation_geometry_source}
Table~\ref{tab:ablation_geometry_source} ablates the target geometry used by CGS. All geometry variants improve over the no-CGS baseline, showing that the pairwise relational objective helps structure the prototype space under scarce supervision. Meaningful geometry is more reliable for few-shot detection: visual geometry \(D_{\mathrm{vis}}\) gives the best 5-shot and 10-shot mAP, while text-derived geometry and visual+text geometry also improve over the baseline but do not surpass \(D_{\mathrm{vis}}\). This suggests that support-crop morphology provides the strongest relational signal for fine-grained ova detection. 
The novel-only setting shows a different pattern: random \(D\) (with permutated class-distance matrix \(D_{\mathrm{vis}}\)) performs strongly, suggesting that even arbitrary pairwise spreading can help separate newly inserted prototypes from existing classes. However, random geometry is less consistent and does not encode class relationships. 

\begin{figure}[t]
    \begin{tabular}{cc}
        \includegraphics[width=0.95\linewidth]{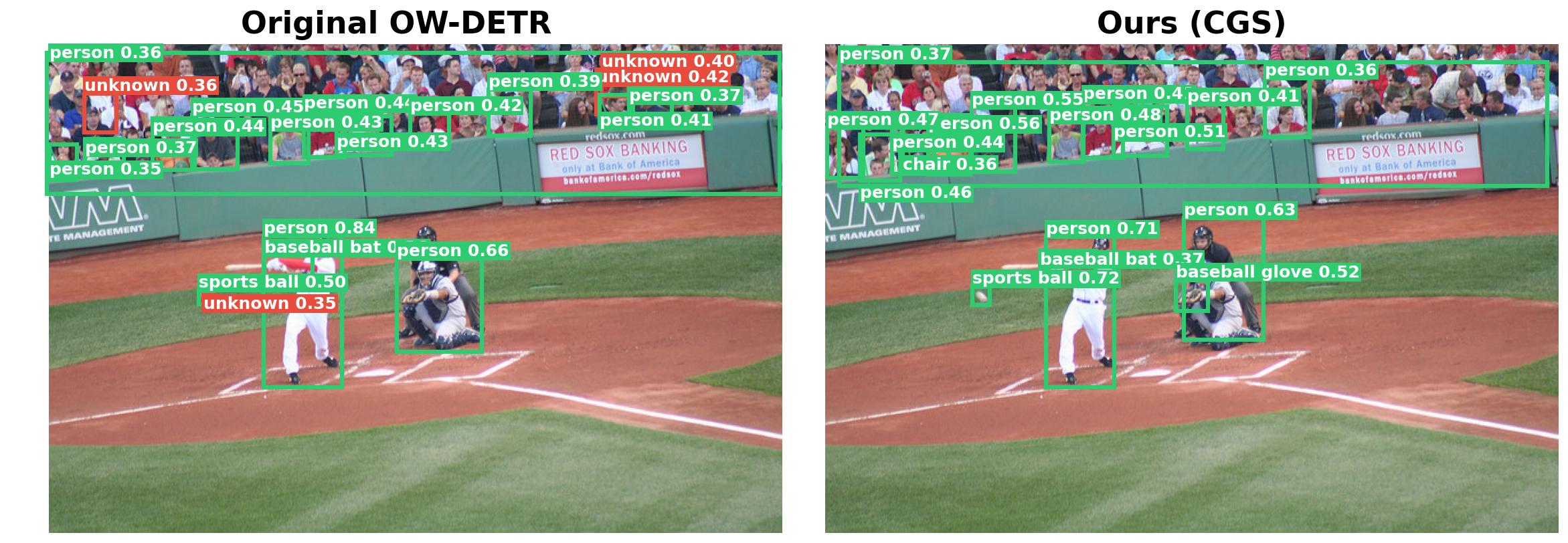}\\
        \small{\textbf{Success:} More known objects recovered (ball, person, glove)}.\\
        \includegraphics[width=0.95\linewidth]{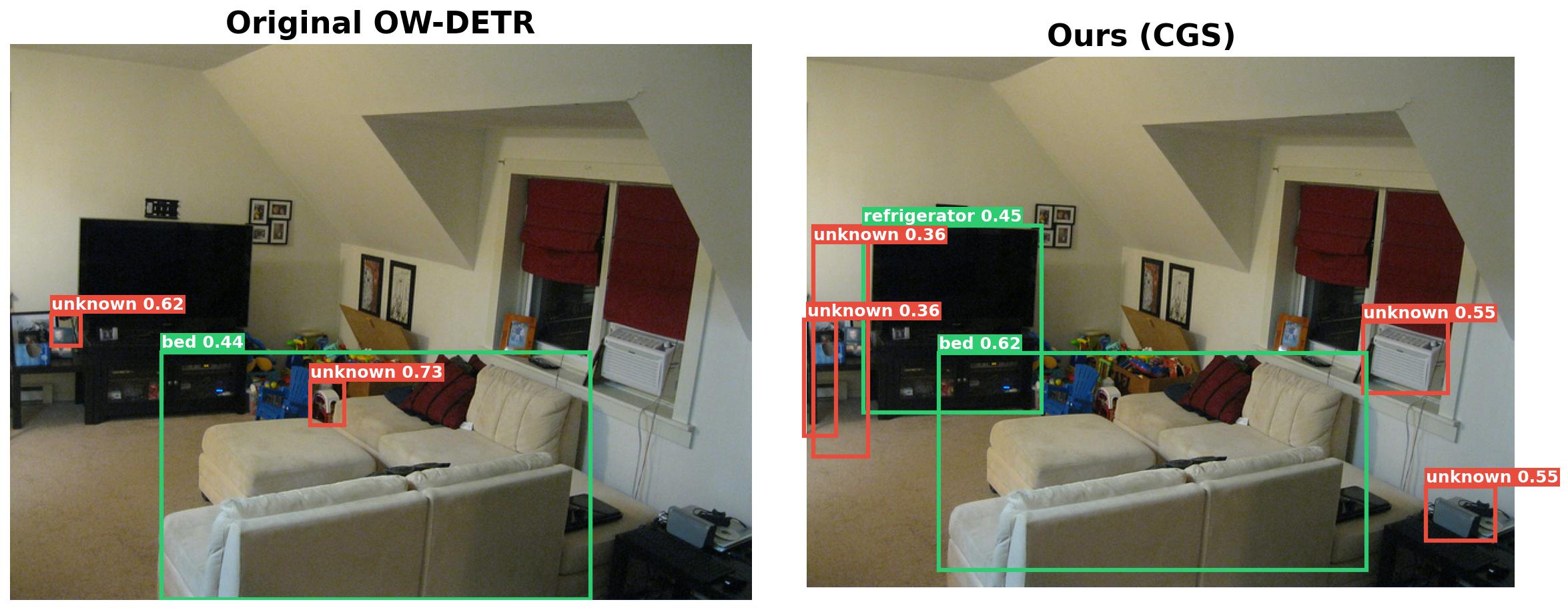}\\
        \small{\textbf{Failure:} semantic confusion and extra unknown detections.}\\
    \end{tabular}
    \caption{
    \textbf{Qualitative comparison on OWOD COCO.}
    Each row compares OW-DETR (left) and OW-DETR+CGS (right). 
    Green boxes: known-class detections; red: unknown. 
    }
    \label{fig:coco_qual_main}
\end{figure}

\section{Conclusion and Future Work}
We introduced class-geometry supervision, a simple relational objective for organizing prototype and class-representation spaces in sample-efficient open-world detection. Across recognition, ova detection, open-set detection, novel-class insertion, and COCO OWOD adaptation, CGS improves prototype calibration and strengthens open-world behavior without requiring a new detector architecture. The results show that class geometry is especially useful for novel insertion and unknown recall, though it can introduce a known-class mAP tradeoff, particularly in frozen-head adaptation. Our ablations further show that the choice of target geometry matters: visual morphology provides the most reliable gains, while random geometry can help separation but lacks consistent and interpretable structure. Future work will address the broader challenge of how to learn or update class geometry reliably as new categories arrive, especially when visual and semantic geometry disagree.

\textbf{Acknowledgement.} This work was supported in part by the US NSF grants IIS 2348689, IIS 2348690, IIS 2615771, and the USDA award no. 2023-69014-39716. 
\bibliography{aaai2027}

\clearpage

\section{Appendix 1}

\section{Extended Mathematical Analysis}

\subsection{What Does Class-Geometry Supervision Control?}

The proposed objective can be interpreted as aligning two complete weighted graphs over the class set. The target graph has edge weights \(\hat{D}_{ij}\), encoding visual or semantic class dissimilarity, while the learned prototype graph has edge weights \(\hat{G}_{ij}\), encoding distances between learned class prototypes. The following proposition shows that minimizing \(\mathcal{L}_{\mathrm{CG}}\) directly controls the average distortion between these two graphs.

\begin{proposition}[Prototype-graph distortion bound]
Let
\[
\mathcal{L}_{\mathrm{CG}}
=
\frac{1}{|\mathcal{E}|}
\sum_{i<j}
\left(
\hat{G}_{ij}-\hat{D}_{ij}
\right)^2
\]
be the class-geometry loss over the class-pair set \(\mathcal{E}\). If
\[
\mathcal{L}_{\mathrm{CG}} \leq \epsilon,
\]
then the average absolute distortion between the learned prototype graph and the target class-dissimilarity graph is bounded as
\[
\frac{1}{|\mathcal{E}|}
\sum_{i<j}
\left|
\hat{G}_{ij}-\hat{D}_{ij}
\right|
\leq
\sqrt{\epsilon}.
\]
\end{proposition}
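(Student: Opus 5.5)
The plan is a single application of the Cauchy--Schwarz inequality, equivalently Jensen's inequality for the concave square root, to the distortions over the finite pair set \(\mathcal{E}\). For each \((i,j)\in\mathcal{E}\) set \(a_{ij}=|\hat{G}_{ij}-\hat{D}_{ij}|\geq 0\), so that \(a_{ij}^2=(\hat{G}_{ij}-\hat{D}_{ij})^2\) and \(\mathcal{L}_{\mathrm{CG}}=\frac{1}{|\mathcal{E}|}\sum_{(i,j)\in\mathcal{E}}a_{ij}^2\). The quantity to bound is the arithmetic mean of the \(a_{ij}\), and the hypothesis bounds their mean square. The claim is therefore the mean--quadratic-mean inequality.

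First apply Cauchy--Schwarz to the vectors \((a_{ij})_{(i,j)\in\mathcal{E}}\) and \((1)_{(i,j)\in\mathcal{E}}\) in \(\mathbb{R}^{|\mathcal{E}|}\):
\[
\sum_{(i,j)\in\mathcal{E}} a_{ij}\leq \Bigl(\sum_{(i,j)\in\mathcal{E}} a_{ij}^2\Bigr)^{1/2}|\mathcal{E}|^{1/2}.
\]
Dividing by \(|\mathcal{E}|\) gives
\[
\frac{1}{|\mathcal{E}|}\sum_{(i,j)\in\mathcal{E}} a_{ij}\leq\Bigl(\frac{1}{|\mathcal{E}|}\sum_{(i,j)\in\mathcal{E}} a_{ij}^2\Bigr)^{1/2}=\sqrt{\mathcal{L}_{\mathrm{CG}}}.
\]
Since \(\mathcal{L}_{\mathrm{CG}}\leq\epsilon\) and the square root is monotone on \([0,\infty)\), this yields \(\sqrt{\mathcal{L}_{\mathrm{CG}}}\leq\sqrt{\epsilon}\), which is the stated bound.

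There is no real obstacle. The only points needing care are well-posedness. The set \(\mathcal{E}\) must be nonempty, so we need \(C_t\geq 2\), and \(\sigma_G,\sigma_D>0\) so that \(\hat{G}\) and \(\hat{D}\) are defined. In practice this means at least two distinct pairwise distances; with exactly two classes the standardized entries are undefined or degenerate. I will state these as standing assumptions. Note also that \(\epsilon\geq\mathcal{L}_{\mathrm{CG}}\geq 0\), so \(\sqrt{\epsilon}\) is real. The argument uses nothing about how \(G\) or \(D\) arise, so it holds for any distance \(d\) and any nonnegative target \(D\).
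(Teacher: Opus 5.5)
Your proof is correct and is essentially the paper's argument: Cauchy--Schwarz bounds the mean absolute distortion by $\sqrt{\mathcal{L}_{\mathrm{CG}}}$, and monotonicity of the square root then gives $\sqrt{\epsilon}$. Your well-posedness remarks (nonempty $\mathcal{E}$, and $\sigma_G,\sigma_D>0$ so that $\hat{G}$ and $\hat{D}$ are defined) are a harmless addition that the paper leaves implicit.
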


\begin{proof}
By Cauchy--Schwarz,
\[
\frac{1}{|\mathcal{E}|}
\sum_{i<j}
\left|
\hat{G}_{ij}-\hat{D}_{ij}
\right|
\leq
\sqrt{
\frac{1}{|\mathcal{E}|}
\sum_{i<j}
\left(
\hat{G}_{ij}-\hat{D}_{ij}
\right)^2
}.
\]
The term inside the square root is exactly \(\mathcal{L}_{\mathrm{CG}}\). Therefore,
\[
\frac{1}{|\mathcal{E}|}
\sum_{i<j}
\left|
\hat{G}_{ij}-\hat{D}_{ij}
\right|
\leq
\sqrt{\mathcal{L}_{\mathrm{CG}}}
\leq
\sqrt{\epsilon}.
\]
\end{proof}

Thus, \(\mathcal{L}_{\mathrm{CG}}\) explicitly minimizes average distortion between the learned prototype graph and the target class-dissimilarity graph. This makes the objective a relational graph-alignment criterion rather than an unconstrained regularizer. While Proposition~1 controls average distortion, preserving local neighborhood topology requires sufficiently small pairwise distortion for the relevant class pairs. The next proposition formalizes this margin condition.

\begin{proposition}[Neighborhood-order preservation]
Suppose the target class geometry indicates that class \(j\) is closer to class \(i\) than class \(k\) by margin \(\gamma>0\):
\[
\hat{D}_{ij} + \gamma \leq \hat{D}_{ik}.
\]
If the learned prototype distances satisfy
\[
\left|
\hat{G}_{ab}-\hat{D}_{ab}
\right|
\leq
\frac{\gamma}{2}
\qquad
\text{for all relevant class pairs } (a,b),
\]
then the learned prototype space preserves this neighborhood relation:
\[
\hat{G}_{ij}\leq \hat{G}_{ik}.
\]
\end{proposition}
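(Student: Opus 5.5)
The plan is to prove the statement directly from the two pairwise distortion hypotheses with a chain of inequalities, without invoking Proposition~1. The only relevant class pairs are \((i,j)\) and \((i,k)\), so the hypothesis gives exactly
\[
\hat{G}_{ij}\leq \hat{D}_{ij}+\frac{\gamma}{2}
\qquad\text{and}\qquad
\hat{G}_{ik}\geq \hat{D}_{ik}-\frac{\gamma}{2},
\]
where each is one side of the corresponding absolute-value bound.

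The key step is to chain these with the margin assumption \(\hat{D}_{ij}+\gamma\leq\hat{D}_{ik}\):
\[
\hat{G}_{ij}\leq \hat{D}_{ij}+\frac{\gamma}{2}
= \left(\hat{D}_{ij}+\gamma\right)-\frac{\gamma}{2}
\leq \hat{D}_{ik}-\frac{\gamma}{2}
\leq \hat{G}_{ik}.
\]
This yields \(\hat{G}_{ij}\leq\hat{G}_{ik}\). The two half-margins together consume exactly \(\gamma\), which is why the threshold \(\gamma/2\) is the natural one.

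There is no real obstacle, but one bookkeeping point needs care. The pairs are stored as \(i<j\) in \(\mathcal{E}\), so \(\hat{G}_{ab}\) and \(\hat{D}_{ab}\) should be read symmetrically when \(j<i\) or \(k<i\). Since both \(d(\cdot,\cdot)\) and \(D\) are symmetric and the normalization is applied entrywise with shared constants, this is harmless.

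I would also remark that the conclusion concerns the normalized distances. It transfers to the raw prototype distances \(G_{ij}\leq G_{ik}\) because \(G_{ab}=\sigma_G\hat{G}_{ab}+\mu_G\) is an increasing affine map whenever \(\sigma_G>0\). Finally, equality \(\hat{G}_{ij}=\hat{G}_{ik}\) can only occur when every inequality in the chain is tight, so a strict margin or strict distortion bound would give strict order preservation.
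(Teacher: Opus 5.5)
Your proof is correct and is essentially the paper's own argument: both take the one-sided consequences of the two distortion bounds and chain them through the margin assumption, $\hat{G}_{ij}\leq\hat{D}_{ij}+\gamma/2\leq\hat{D}_{ik}-\gamma/2\leq\hat{G}_{ik}$. Your side remarks are also correct: reading the indices symmetrically, transferring the order to raw distances when $\sigma_G>0$, and getting strict order preservation under a strict margin or a strict distortion bound.
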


\begin{proof}
Using the assumed distortion bound,
\[
\hat{G}_{ij}
\leq
\hat{D}_{ij}+\frac{\gamma}{2}.
\]
Since \(\hat{D}_{ij}+\gamma \leq \hat{D}_{ik}\), we have
\[
\hat{D}_{ij}+\frac{\gamma}{2}
\leq
\hat{D}_{ik}-\frac{\gamma}{2}.
\]
Again using the distortion bound,
\[
\hat{D}_{ik}-\frac{\gamma}{2}
\leq
\hat{G}_{ik}.
\]
Combining these inequalities gives
\[
\hat{G}_{ij}
\leq
\hat{G}_{ik},
\]
which proves the claim.
\end{proof}

Proposition~2 shows that when the target class geometry has a sufficient margin between nearby and distant classes, low pairwise distortion preserves the corresponding neighborhood order in the learned prototype space. In practice, minimizing \(\mathcal{L}_{\mathrm{CG}}\) reduces average graph distortion, while empirical analyses of distance correlations and nearest-neighbor consistency can verify whether this relational structure is preserved for the learned prototypes.

\begin{figure*}[t]
    \centering
    \begin{tabular}{cc}
            \includegraphics[width=0.4\linewidth, cfbox=green 2pt]{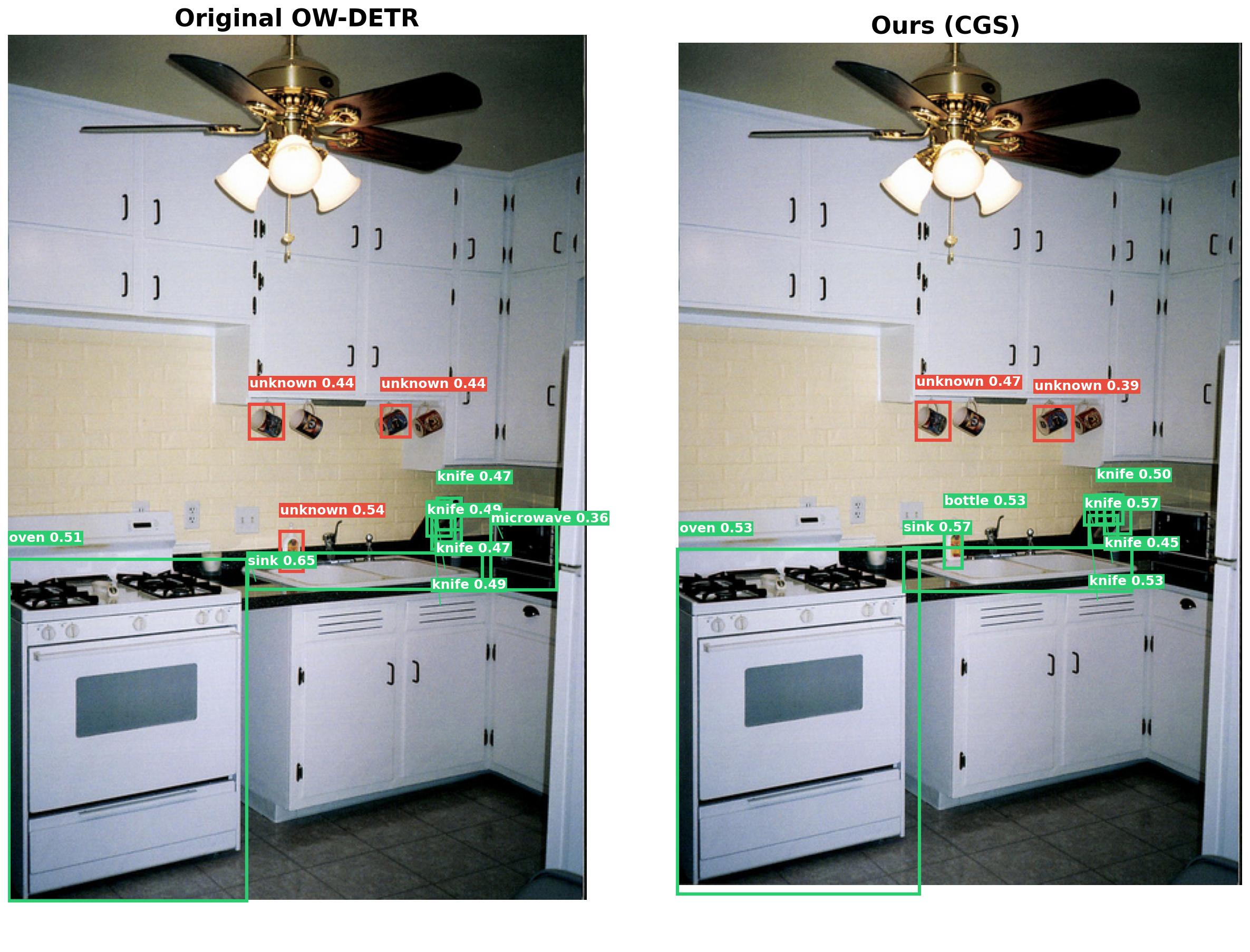} & 
            \includegraphics[width=0.4\linewidth,cfbox=green 2pt]{ 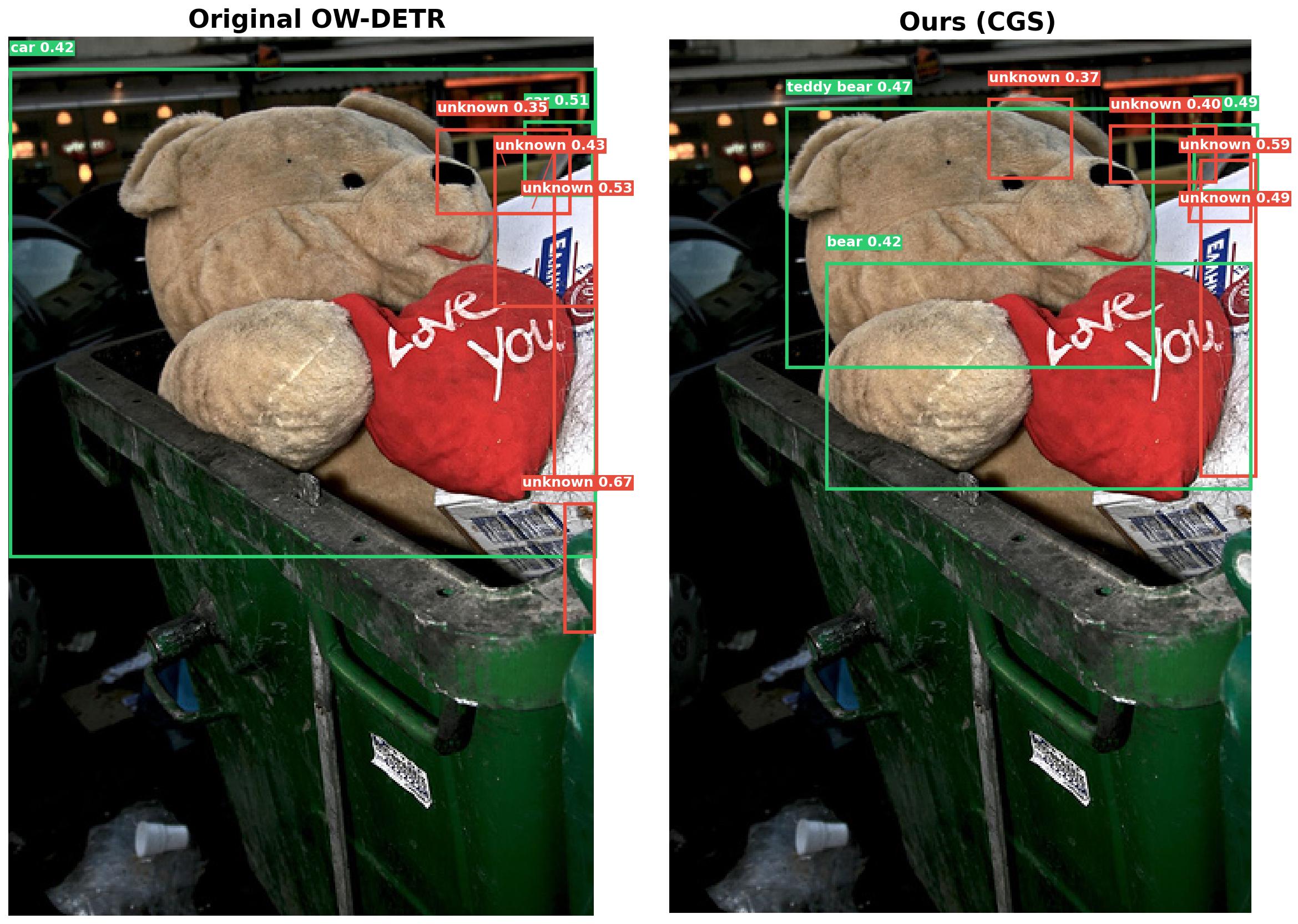} \\ 
            Bottle recovered from unknown. & 
            Bear corrected from a wrong known label.\\
            \includegraphics[width=0.4\linewidth,cfbox=green 2pt]{ Figures/qualitative_000000060507.jpg} & 
            \includegraphics[width=0.4\linewidth, cfbox=red 2pt]{ 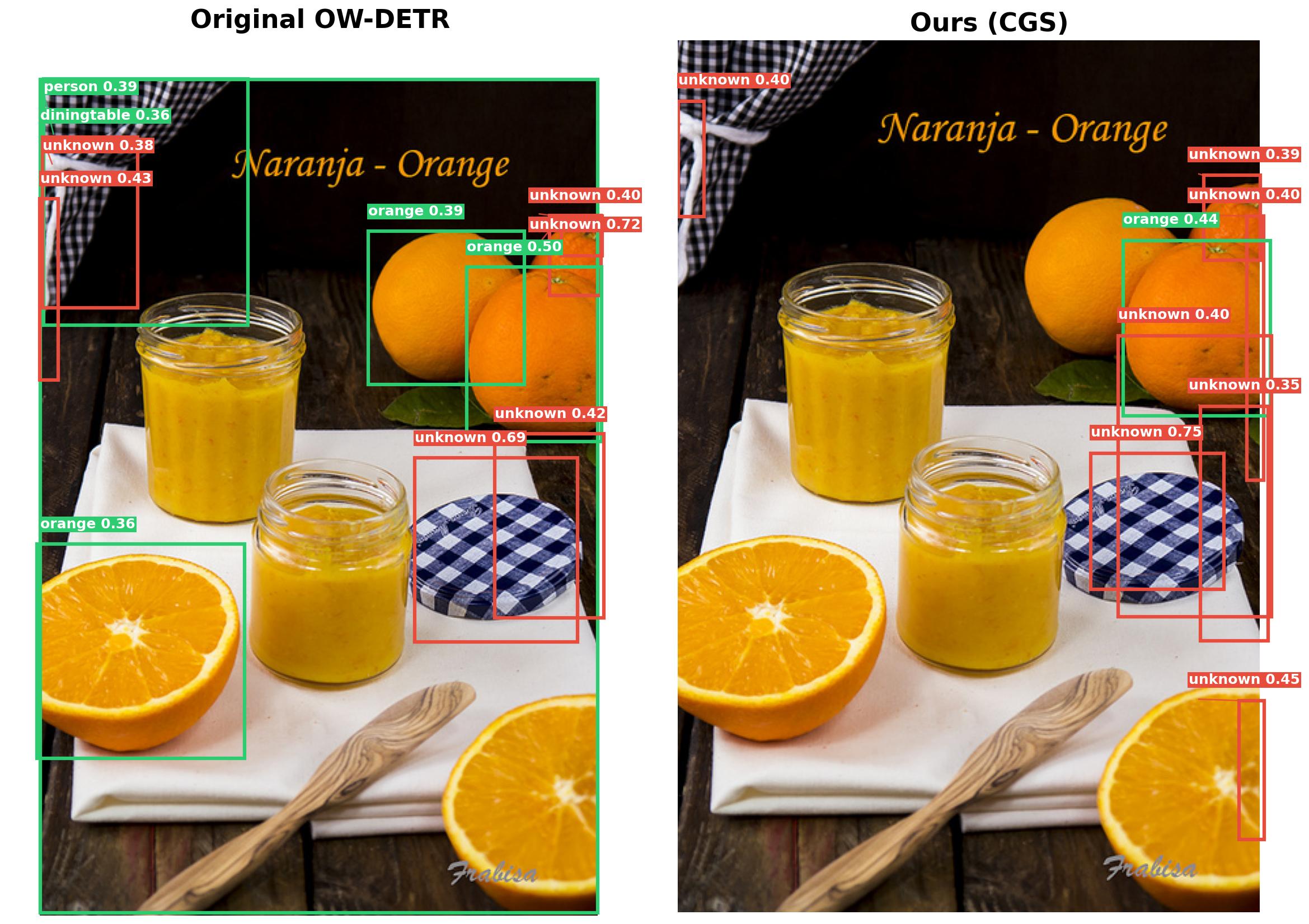}\\
            Additional known objects recovered. & Missed known object and more unknown boxes.\\
            \includegraphics[width=0.4\linewidth, cfbox=red 2pt]{ 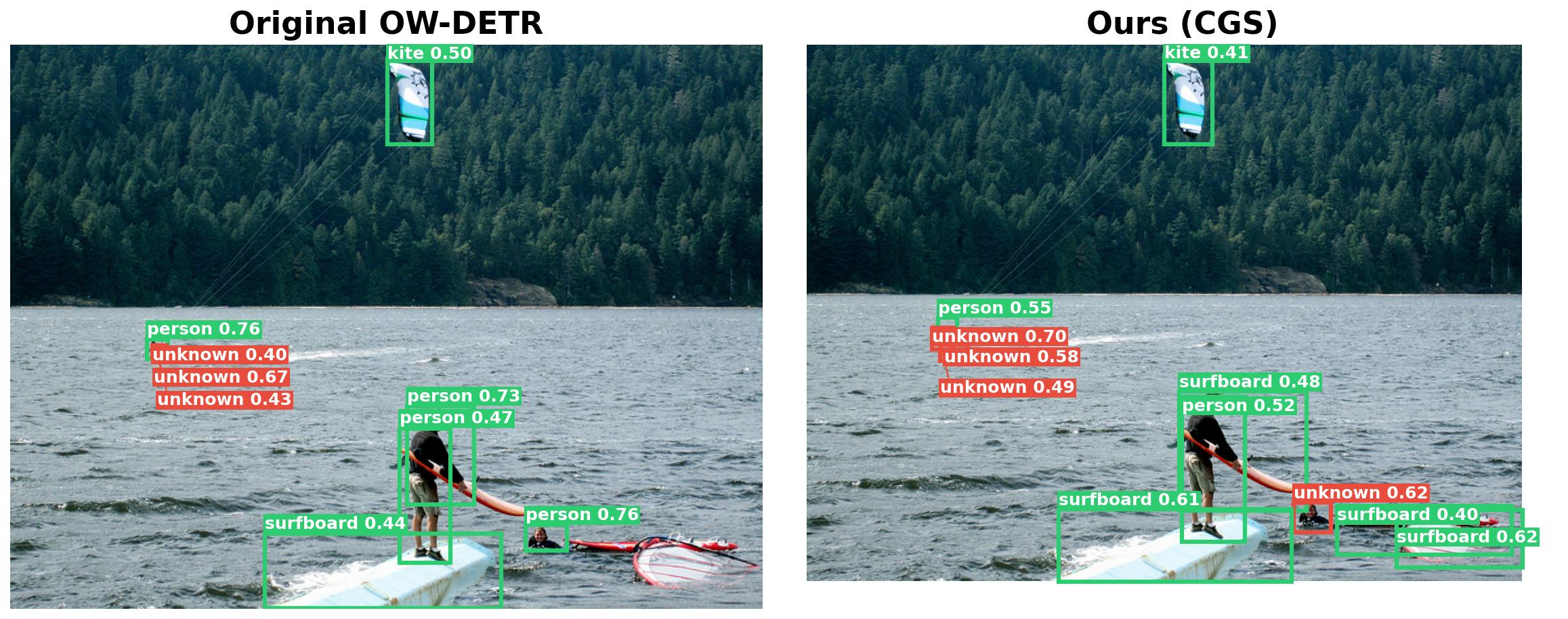} & 
            \includegraphics[width=0.4\linewidth, cfbox=red 2pt]{ Figures/qualitative_000000232088.jpg}\\
            Person region partially confused as unknown. & 
            Semantic confusion and increased unknown detections.\\
    \end{tabular}
    \caption{
    \textbf{Qualitative comparison on OWOD-COCO.}
    For each example, the left image shows original OW-DETR predictions and the right image shows predictions after class-geometry supervision (CGS).
    Green boxes denote known-class detections and red boxes denote unknown detections.
    Examples with green borders highlight representative improvements: CGS can recover known objects previously marked as unknown, correct some misclassifications, and recover additional known objects.
    Examples with red borders highlight the tradeoff: increased open-world sensitivity can introduce extra unknown boxes, partial person-to-unknown confusion, and occasional semantic label errors.
    }
    \label{fig:coco_qualitative}
\end{figure*}
\begin{figure*}[t]
    \centering
    \begin{tabular}{c}
        \includegraphics[width=0.8\linewidth, cfbox=green 2pt]{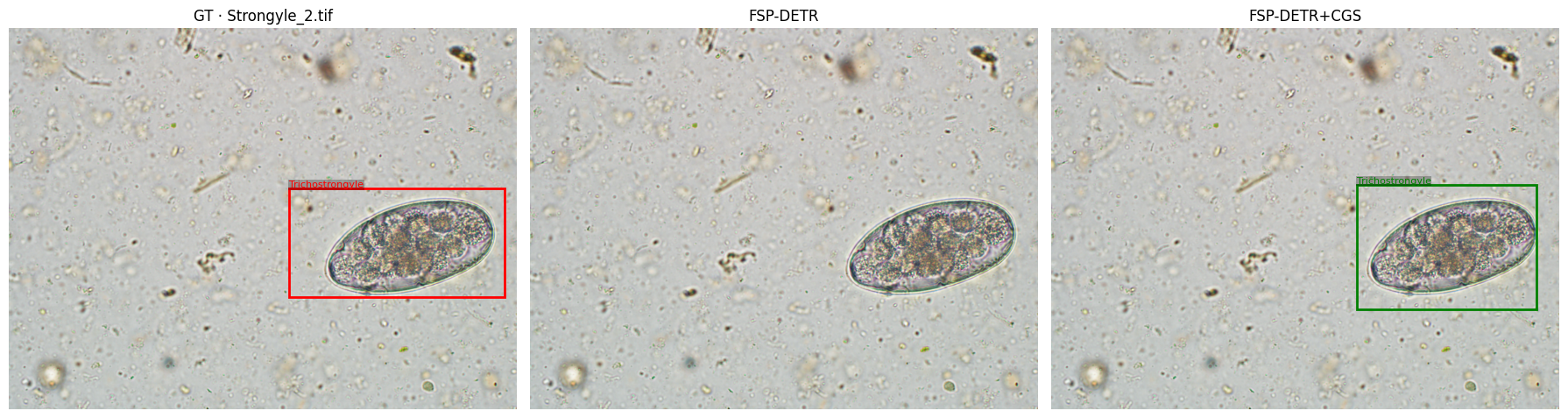} \\
        \includegraphics[width=0.8\linewidth, cfbox=red 2pt]{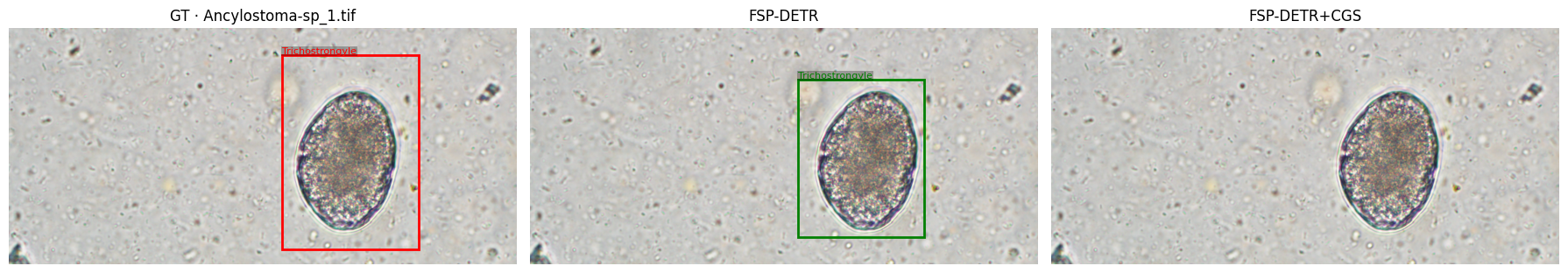} \\
        \includegraphics[width=0.8\linewidth, cfbox=green 2pt]{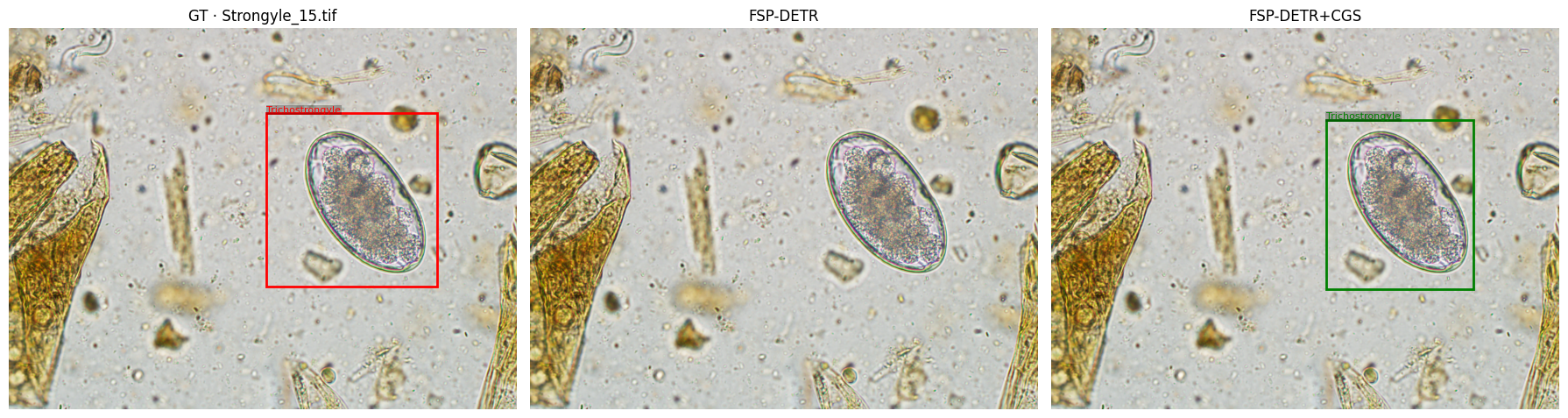} \\
        \includegraphics[width=0.8\linewidth, cfbox=green 2pt]{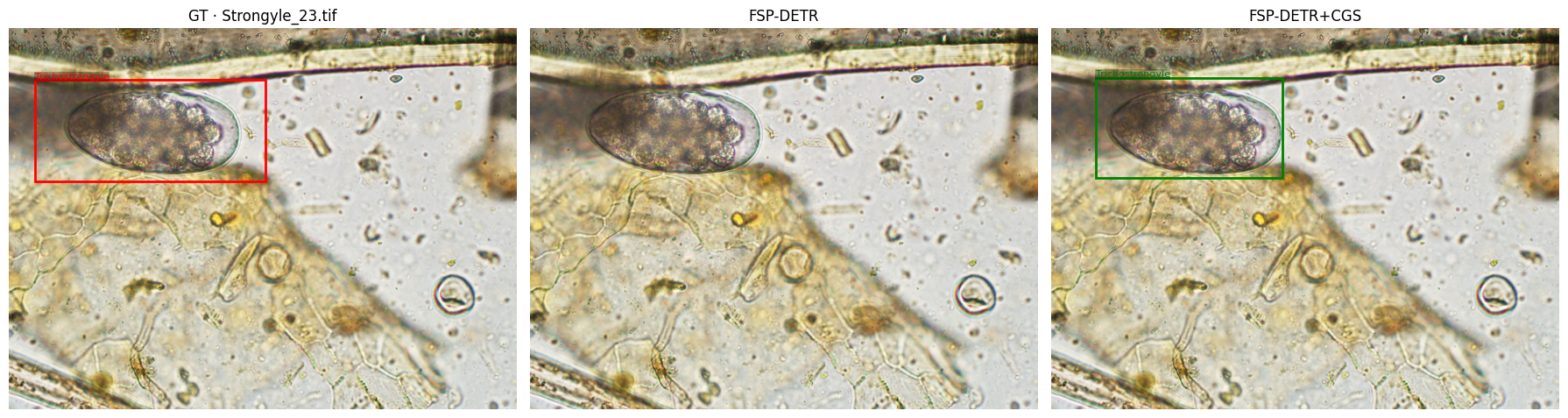} \\
    \end{tabular}
    \caption{
    \textbf{Qualitative comparison on Ova dataset.}
    For each example, the left image shows ground truth, the middle shows FSP-Detr predictions, and the right image shows predictions after class-geometry supervision (CGS). The red bounding box denotes the ground truth ova parasite, and the green bounding boxes denote the predictions made by FSP-Detr and FSP-Detr+CGS. The examples with green borders highlight the improvement of FSP-Detr with class geometry supervision (CGS). The red borders denote the instances where CGS did not improve FSP-Detr prediction. 
    }
    \label{fig:ova_qualitative}
\end{figure*}
\subsubsection{Qualitative analysis.}
Figure~\ref{fig:coco_qualitative} shows that class-geometry supervision changes the operating behavior of the detector in a consistent way.
In several cases, CGS makes the model less likely to absorb unfamiliar regions into incorrect known classes, instead either recovering the correct known category or isolating ambiguous content as unknown.
This is visible in examples where a bottle is no longer rejected as unknown, a bear is no longer confused with a car, and additional sports-related objects are correctly recognized.
However, the same behavior also produces a failure mode: the model becomes more sensitive to ambiguous regions and can over-predict unknown objects, including on parts of known objects or humans.
Thus, the qualitative results mirror the quantitative tradeoff in Table~4: CGS improves unknown awareness, but may also reduce precision through additional unknown detections and occasional semantic confusion.
Figure~\ref{fig:ova_qualitative} shows similar qualitative visualizations on the few-shot Ova detection problem. 
\section{Supplementary Implementation Details}
\label{supp:implementation}

\paragraph{Prototype recognition.}
We train ProtoNet using the same episodic train/validation/test splits as the baseline. Each episode samples \(N=\) 16 classes with \(K\in\{5,10,15,20\}\) support examples per class and 5 query examples per class. The encoder is Wide Resnet, trained with Adam optimizer, learning rate $1 \times 10^{-5}$, and episode size 500 for 100 epochs. CGS is applied to the episode-level class prototypes with \(\lambda=\{\mathrm{0.05, 0.01, 0.1, 0.5, 1}\}\), selected on the validation split.

\paragraph{Ova detection.}
For few-shot ova detection, we use the same 15 known-class / 5 held-out-class split, image preprocessing, and support construction as FSP-DETR. DETR is first trained with the standard detection objective, after which support-derived prototypes are computed from \(K\in\{5,10,15,20\}\) examples per class. CGS is applied during the prototype-matching stage with \(\lambda=\{\mathrm{0.05, 0.01, 0.1, 0.5, 1}\}\). Visual dissimilarities \(D_{\mathrm{vis}}\) are computed from cropped ground-truth support regions resized to \(224\times 224\), using average pairwise pixel MSE between classes. For text geometry, class descriptions are generated using Google Gemini and embedded using MPNet; pairwise text dissimilarity is computed as \(1-\cos(e_i,e_j)\).

\paragraph{Open-set and novel-class insertion.}
Unknown rejection uses the nearest-prototype distance \(m(z)=\min_c d(z,p_c)\), with threshold \(\tau_{\mathrm{unk}}\) selected on the validation split. For novel-class insertion, prototypes for the held-out classes are computed from their support examples and added to the existing prototype set without retraining existing class prototypes unless otherwise stated. The expanded geometry matrix includes pairwise relationships between seen and newly inserted classes.

\paragraph{COCO OWOD adaptation.}
For COCO, we follow the OW-DETR task protocol and report results using the standard Task~1--4 evaluation format. In frozen-head adaptation, OW-DETR features are fixed and only a two-layer classification MLP is fine-tuned with CGS. In full fine-tuning, CGS is added to the standard OW-DETR training objective and all model parameters are updated. For Fig.~2(c), Task~1 uses \(K\in\{5,10,15,25,50,100,250,500,750,1000\}\) labeled examples per class and reports relative change with respect to the full OW-DETR Task~1 reference.

\paragraph{Random-geometry ablation.}
The random-geometry baseline preserves the regularization form while removing meaningful class-pair structure. We construct \(D_{\mathrm{rand}}\) by randomly permuting the off-diagonal entries of \(D_{\mathrm{vis}}\), then symmetrizing the matrix and setting the diagonal to zero. This preserves the marginal distribution of pairwise dissimilarities while destroying their assignment to specific class pairs.

\paragraph{Optimization and model selection.}
All thresholds, CGS weights, and normalization choices are selected using the validation split. Unless otherwise specified, pairwise distances in the prototype space use squared euclidean distance, and off-diagonal entries of \(D\) and \(G\) are standardized before computing \(\mathcal{L}_{\mathrm{CG}}\). We report the best validation-selected model for each setting and use the same evaluation protocol as the corresponding baseline.

\paragraph{Hardware.}
Experiments are run on a server with Nvidia RTXA550 GPUs. Ova recognition and detection experiments use 1 GPU with 24 GB memory. COCO OWOD adaptation and full fine-tuning experiments use 4 GPUs. Training time is approximately 6 hours for ProtoNet recognition, 25 minutes for ova detection, and 10 hours for COCO adaptation.
\end{document}